\documentclass[conference]{IEEEtran}
\IEEEoverridecommandlockouts

\usepackage{cite}
\usepackage{amsmath,amssymb,amsfonts}
\usepackage{amsthm}
\usepackage{booktabs}
\usepackage{multirow}
\usepackage{tabularx}
\usepackage{graphicx}
\usepackage{xcolor}
\usepackage{url}
\usepackage{float}
\usepackage[ruled,vlined,linesnumbered]{algorithm2e}
\usepackage{etoolbox}

\AtBeginEnvironment{thebibliography}{\enlargethispage{6pt}\footnotesize\setlength{\itemsep}{-0.6pt}\setlength{\parskip}{0pt}}
\SetKwInput{KwIn}{Input}
\SetKwInput{KwOut}{Output}
\SetKwProg{Fn}{Function}{:}{}
\SetKw{KwBreak}{break}
\newcolumntype{Y}{>{\raggedright\arraybackslash}X}
\newtheorem{proposition}{Proposition}
\newcommand{\kw}[1]{\textsc{#1}}

\newcommand{\R}{\mathbb{R}}
\newcommand{\G}{\mathcal{G}}
\newcommand{\V}{\mathcal{V}}
\newcommand{\E}{\mathcal{E}}
\newcommand{\M}{\mathcal{M}}
\newcommand{\Lcal}{\mathcal{L}}
\newcommand{\X}{\boldsymbol{X}}
\newcommand{\Z}{\boldsymbol{Z}}
\newcommand{\Hh}{\boldsymbol{H}}
\newcommand{\A}{\boldsymbol{A}}
\newcommand{\D}{\boldsymbol{D}}
\newcommand{\I}{\boldsymbol{I}}
\newcommand{\Lm}{\boldsymbol{L}}
\newcommand{\Lb}{\bar{\boldsymbol{L}}}
\newcommand{\Ut}{\boldsymbol{U}}
\newcommand{\Lambdab}{\boldsymbol{\Lambda}}

\newcommand{\softmax}{\operatorname{softmax}}
\newcommand{\tr}{\operatorname{tr}}
\def\BibTeX{{\rm B\kern-.05em{\sc i\kern-.025em b}\kern-.08em
    T\kern-.1667em\lower.7ex\hbox{E}\kern-.125emX}}

\title{SMGFM: Spectral Multimodal Graph Pretraining for Multimodal-Attributed Graphs}

\author{Zhengyu Wu, Xu Wang, Hongchao Qin, Xunkai Li, Guang Zeng, Rong-Hua Li, Guoren Wang}

\begin{document}

\maketitle

\begin{abstract}

Multimodal-attributed graphs (MAGs) couple graph topology with node semantics from text, images, and other modalities, making them important for real-world applications. 
Traditional graph learning contextualizes node semantics by coupling topology with node features. 
However, this coupling design becomes troublesome in MAGs, where structure-induced and modality-intrinsic semantics may contribute differently to downstream tasks. Structure-induced semantics promote relational consistency through smooth topological variation, whereas modality-intrinsic semantics often encode local, fine-grained distinctions that should not be uniformly smoothed or aligned. Existing multimodal graph encoders typically fuse topology and modalities in a shared spatial representation space, delaying semantic role identification until after interaction and thereby entangling semantics that require different treatments. 
Therefore, the key challenge is to identify semantic roles before cross-modal fusion. 
To this end, we leverage graph-frequency variation as a prior, where low-frequency components capture topology-consistent semantics and high-frequency components preserve modality-specific semantics. 
Based on this intuition, we propose \textit{SMGFM}, a spectral multimodal graph pretraining framework that decomposes each modality-specific node signal into graph-frequency bands and assigns band-level semantic roles before cross-modal interaction. 
Concretely, SMGFM constructs frequency-resolved modality tokens with scalable Chebyshev filters, estimates their coupling reliability through topology-conditioned routing, and performs band-modality interaction before fusion. 
Its frequency-routed objectives align smooth consensus routes while preserving modality-specific routes, mitigating spatial-domain entanglement and uniform cross-modal alignment.
Extensive experiments conducted on the MAG datasets demonstrate that SMGFM achieves state-of-the-art performance across graph-level and modality-level tasks.
\end{abstract}

\begin{IEEEkeywords}
multimodal-attributed graphs, graph signal processing, spectral graph learning, multimodal alignment
\end{IEEEkeywords}

\section{Introduction}
Multimodal-Attributed Graphs (MAGs)~\cite{wei2019mmgcn,tao2020mgat,LGMRec,MIG-GT} couple graph topology with node attributes from text, images, and other modalities, and have attracted growing attention in recent years for information retrieval, recommendation, social analysis, and scientific discovery~\cite{ektefaie2023multimodal,zhu2025mmgraph,cellur,wan2026openmag}.
Across these domains, topology and multimodal attributes provide complementary but non-equivalent sources of information.
On the one hand, topology implicitly encodes relational interactions among nodes; on the other hand, multimodal attributes offer diversified node-intrinsic semantics.
This dual evidence source makes MAGs a natural testbed for reusable graph pretraining, where Graph Foundation Models (GFMs) aim to support transferable graph representations and recent Multimodal Graph Foundation Models (MGFMs) further seek unified interfaces across topology, modalities, and tasks~\cite{GFM_Benchmark,gfm_unigraph,gfm_graphclip,gfm_gft,he2025unigraph2,liu2026planet}.
However, foundation-style MAG pretraining is more difficult than attaching a graph encoder to a shared multimodal space, because the same graph signal may provide useful relational consensus for one task while disrupting modality-specific evidence for another.
Traditional graph learning methods commonly couple topology with node features through neighborhood aggregation, so that node semantics are contextualized by relational structures.
In the context of reusable MAG encoders, directly coupling topology with all modality features can obscure whether the learned representation reflects structure-induced semantics or modality-intrinsic semantics.

Despite their notable advances, existing MAG studies expose a central challenge in semantic coupling, since structure-induced semantics and modality-intrinsic semantics are not equally useful across downstream objectives.
Structure-induced smoothness can propagate relational and higher-order semantics across neighborhoods, but excessive smoothing may suppress modality-specific details and cross-modal discrepancies needed for fine-grained discrimination.
Conversely, preserving modality-intrinsic semantics without structural coordination may benefit retrieval-sensitive representations, yet it can weaken the relational consistency required for graph-level inference.
For foundation-style pretraining, this tension is amplified: a unified representation must remain useful across graph-centric and modality-centric tasks, but a uniform coupling rule may induce negative transfer between smooth structural consensus and private modality evidence.
Therefore, the key difficulty is not merely how to combine topology and modalities, but how to decide which semantics should be coupled, preserved, or separated before representation learning becomes irreversible.
\textbf{L1. Spatial-domain entanglement.}
Most multimodal graph models instantiate semantic coupling through operations performed directly on the graph structure, causing the role of each semantic component to be determined only after topology and modalities have already interacted~\cite{jia2023mhgat,graph4mm,GraphGPT-O, 2020h2gcn}.
This is especially restrictive for reusable encoders, because downstream heads can adapt only to the entangled representation they receive.
\textbf{L2. Uniform cross-modal alignment.}
Optimizing different modalities toward a shared representation objective may over-align modality-private information and obscure semantic distinctions that should remain modality-specific~\cite{you2020grace,GCC,hou2023graphmae2,gfm_unigraph,he2025unigraph2}.

Taken together, these limitations show that the key bottleneck of MAG pretraining lies in when semantic roles are identified. Once structure-induced semantics and modality-intrinsic semantics have been fused into a spatial representation, downstream encoders can only adapt to an already entangled embedding rather than recover the semantic distinctions required by different tasks. This motivates the research question studied in this paper, \textit{how can we pretrain a reusable MAG encoder that separates graph-smooth semantic consensus from modality-intrinsic semantics before cross-modal interaction?} Addressing this question requires the encoder to first expose how each modality varies over the graph, and then apply role-dependent objectives so that shared smooth semantics are aligned while modality-specific semantics remain discriminative. To this end, we propose \underline{\textbf{SMGFM}}, a spectral multimodal graph pretraining framework for MAGs. Our key insight is that graph frequency provides a pre-interaction prior for foundation-style MAG pretraining: (1) smooth frequency components are more likely to encode topology-consistent semantic consensus, while (2) rapidly varying components often contain modality-intrinsic semantics that should not be suppressed.
SMGFM implements this idea by using scalable Chebyshev filters~\cite{defferrard2016chebnet} to transform each modality into band-specific tokens, so semantic coupling is performed at a frequency-resolved level rather than in a single spatial representation.
A topology-conditioned router then evaluates token reliability from graph smoothness, spectral energy, cross-modal agreement, and modality availability, allowing role-constrained interaction to exchange semantics only when the corresponding frequency bands are reliable for coupling.
During pretraining, SMGFM aligns smooth consensus routes while preserving modality-specific routes, thereby avoiding the uniform alignment pressure that may erase discriminative modality semantics.
Experiments provide evidence from controlled text-image binding, matched-protocol real-data graph tasks, spectral diagnostics, ablations, sensitivity analysis, and boundary tests under missing modalities, heterophily, and efficiency constraints.

\noindent \textbf{Our Contributions are as follows:} \\
\textbf{(1) Foundation-aware Perspective:}
We identify semantic role identification as a missing prerequisite for reusable MAG pretraining, showing that structure-induced semantics and modality-intrinsic semantics should not be uniformly coupled.
This perspective connects MGFM motivation with frequency-aware treatment of graph-smooth consensus and modality-private evidence.\\
\textbf{(2) New Paradigm:}
We propose \underline{\textbf{SMGFM}}, a spectral multimodal graph pretraining paradigm that decomposes modality semantics into graph-frequency bands and performs topology-conditioned role routing before interaction.
SMGFM aligns smooth semantic consensus while preserving modality-specific semantics through frequency-routed objectives.\\
\textbf{(3) Evidence-backed Validation:}
Experiments demonstrate controlled cross-modal binding and matched-protocol graph-task utility under fixed experimental settings.
Ablation studies, spectral diagnostics, sensitivity analysis, and boundary checks further verify the effectiveness and limits of frequency-aware semantic role modeling.

\section{Related Work}
\subsection{Multimodal Graph Learning and Pretraining}
Multimodal graph learning combines topology with text, images, audio, tabular attributes, and other node evidence.
Task-oriented models usually project modalities into a shared space and then apply graph convolution, graph attention, modality attention, or local-global graph modeling for recommendation, classification, retrieval, and clustering \cite{wei2019mmgcn,tao2020mgat,jia2023mhgat,LGMRec,MIG-GT}.
Recent MAG benchmarks broaden this view by evaluating multiple tasks and modalities under common protocols \cite{zhu2025mmgraph,yan2024multimodalbenchmark,wan2026openmag}.
Graph self-supervised learning, graph foundation models, and multimodal graph pretraining further pursue reusable encoders through contrastive objectives, masked reconstruction, unified embedding spaces, and topology-aware semantic binding \cite{you2020grace,gfm_gft,he2025unigraph2,liu2026planet,graph4mm,GraphGPT-O}.
These works motivate reusable MAG encoders, but most apply fusion or alignment after topology and modalities have already been mixed in a spatial representation.
Consequently, they mainly address how to combine graph and modality evidence after a shared representation exists, while Limitation One asks whether that representation has already entangled graph-smooth consensus with modality-private cues.
SMGFM changes the interaction point by decomposing each modality into graph-frequency bands before alignment and fusion.

\subsection{Spectral Graph Learning}
Spectral graph learning separates smooth and rapidly varying graph signals through the Laplacian eigenbasis and Dirichlet energy \cite{shuman2013emerging,vonluxburg2007tutorial}.
ChebNet made this view practical by replacing eigendecomposition with localized Chebyshev filters \cite{defferrard2016chebnet}, while propagation and polynomial models simplify or deepen frequency-sensitive filtering \cite{wu2019sgc,2019appnp,chen2020gcnii,zhu2021ssgc,zhang2022nafs}.
Adaptive spectral models then learn flexible responses for homophily, heterophily, expressive filtering, and graph transformers \cite{chien2021gprgnn,he2021bernnet,dong2021adagnn,pmlr2022Jacobigcn,guo2023optimal_poly_gnn,zheng2023_TPAMI_3,bo2023specformer,kreuzer2021san,rampasek2022graphgps}.
Heterophily-oriented designs show that useful graph evidence is not always smooth \cite{2020h2gcn,du2022gbkgnn,song2023ordergnn}.
These methods make non-smooth graph evidence visible, but they mainly study single-view features or representation-level spectral responses.
They therefore do not specify whether a high-frequency text or image cue should be treated as conflict, complementary private evidence, or noise in a MAG encoder.
In MAGs, a high-frequency component may be useful modality-private evidence, so SMGFM uses spectral filtering as the first stage of multimodal routing.

\subsection{Cross-modal Alignment}
Cross-modal alignment learns compatible representations across modalities and is central to modern text-image representation learning \cite{radford2021clip,imagebind,gsmn}.
Graph contrastive learning similarly uses positive and negative views to improve transferable graph encoders \cite{you2020grace,GCC,liu2022spgcl,mo2022sugrl}.
For MAGs, uniform alignment may over-smooth or over-align private node evidence.
SMGFM therefore aligns low-frequency consensus while preserving high-frequency modality-private evidence, treating compatibility as a role-specific objective rather than a uniform fusion rule.

\newpage
\section{Preliminaries}
\label{sec:preliminaries}
This section defines the notation, graph-frequency operators, and task interfaces used by SMGFM.
The goal is to make the later frequency-first tokenization, topology-conditioned routing, and frequency-routed objectives precise without introducing additional assumptions.

\begin{table}[H]
\centering
\caption{Notation used throughout Preliminaries and Method.}
\label{tab:notation}
\resizebox{\linewidth}{!}{
\begin{tabular}{ll}
\toprule
Symbol & Meaning \\
\midrule
$\G,\V,\E,\A,\D$ & MAG, nodes, edges, adjacency, and degree matrices \\
$\tilde{\A},\I,\Lm,\Lb$ & normalized adjacency, identity, Laplacian, and scaled Laplacian \\
$\X^{(m)},\Hh_0^{(m)}$ & encoded and projected signal of modality $m$ \\
$\Hh_b^{(m)},\tilde{\Hh}_b^{(m)}$ & pre- and post-mixer band-$b$ modality-$m$ tokens \\
$\Z_i,\Z_i^{(m)},\Z_e$ & fused node, modality-specific node, and edge embeddings \\
$\mathcal{B}_C,\mathcal{B}_P,\mathcal{B}_U$ & consensus, private, and uncertain band sets \\
$\rho_i^{(m)},r_{i,b}^{(m)},\omega$ & modality availability, routing score, and route weight \\
\bottomrule
\end{tabular}}
\end{table}

\subsection{Problem Formulation}
Consider a Multimodal-Attributed Graph (MAG) $\G=(\V,\E,\A)$, where $\V$ is the node set, $\E$ is the edge set, $\A\in\R^{n\times n}$ is the adjacency matrix, $n=|\V|$, and $\D$ is the degree matrix.
Each node $v_i$ can contain raw modality attributes $\{r_i^{(m)}:m\in\M\}$, where $\M$ may include text, image, audio, tabular, or other modalities.
After modality-specific encoding, these raw attributes become modality graph signals $\X^{(m)}\in\R^{n\times d_m}$; optional edge attributes $\X^{(e)}\in\R^{|\E|\times d_e}$ can support edge reconstruction or link prediction.
For supervised node tasks, $\boldsymbol{Y}\in\R^{n\times C}$ denotes labels over $C$ classes, with labeled, validation, and test subsets fixed before adaptation.
For incomplete MAGs, $\rho_i^{(m)}\in[0,1]$ records whether modality $m$ is observed and usable for node $v_i$.
The pretrained encoder is denoted by $f_\theta(\G,\{\X^{(m)}\}_{m\in\M},\X^{(e)})$ and returns fused node embeddings $\Z_i$, modality-specific node embeddings $\Z_i^{(m)}$, and optional edge embeddings $\Z_e$.

\subsection{Graph-Frequency Operators and Band Roles}
SMGFM treats each modality matrix as a graph signal and measures frequency by edge-wise variation.
The topology operators are the normalized adjacency and its Laplacian:
\begin{equation}
\tilde{\A}=\D^{-1/2}\A\D^{-1/2},\quad
\Lm=\I-\tilde{\A}.
\label{eq:prelim-operators}
\end{equation}
Here $\tilde{\A}$ supports degree-balanced propagation, while $\Lm$ measures departures from graph smoothness \cite{shuman2013emerging,vonluxburg2007tutorial}.

Signal variation is quantified by Dirichlet energy:
\[
\mathcal{E}_{\Lm}(\X)=\tr(\X^\top\Lm\X).
\]
A low-frequency signal has small $\mathcal{E}_{\Lm}$ and changes smoothly along edges; a high-frequency signal has larger energy and changes sharply across connected nodes, motivating the separation between consensus and modality-private evidence.

For scalable Chebyshev filtering, SMGFM rescales the Laplacian as
\[
\Lb=\frac{2}{\lambda_{\max}}\Lm-\I,
\]
placing the spectrum in the Chebyshev operating range.
Rather than using the exact but costly eigendecomposition $\Lm=\Ut\Lambdab\Ut^\top$, polynomial filters approximate graph-frequency bands without explicitly computing eigenvectors \cite{hammond2011wavelets,defferrard2016chebnet,wu2019sgc,zhu2021ssgc,pmlr2022Jacobigcn,guo2023optimal_poly_gnn}.

The $b$-th band filter is written as
\[
g_b(\Lm)=\sum_{k=0}^{K}\alpha_{b,k}T_k(\Lb),
\]
where $T_k$ is the Chebyshev basis and $\alpha_{b,k}$ is a learnable coefficient.
Applying $g_b(\Lm)$ to $\X^{(m)}$ produces band tokens $\Hh_b^{(m)}$, the evidence units later used for routing, interaction, fusion, and frequency-specific losses.
We use $\mathcal{B}_C$, $\mathcal{B}_P$, and $\mathcal{B}_U$ for consensus, private, and uncertain band sets: low bands provide graph-smooth consensus candidates, high bands preserve modality-private evidence, and uncertain bands are assigned by topology-conditioned routing.

\subsection{Pretraining and Evaluation Interfaces}
SMGFM separates reusable pretraining outputs from the lightweight heads used for evaluation.
Graph-centric interfaces use fused graph-aware representations: node classification applies a linear or MLP head to $\Z_i$, link prediction scores candidate edges from endpoint embeddings and $\Z_e$ when available, node clustering groups $\Z_i$ without label supervision, and heterophily stress tests probe behavior under edge conflict.
Modality-centric interfaces use modality-specific or paired representations: text-image retrieval compares $\Z_i^{(m)}$ across paired modalities, and missing-modality stress tests vary $\rho_i^{(m)}$ to measure how incomplete evidence affects the same encoder.
In the implemented experiments, raw modality encoders are frozen and all methods receive the same RoBERTa/CLIP or synthetic features \cite{roberta,radford2021clip,hamilton2017graphsage,velickovic2018gat}.
Few-shot adaptation, missing-modality completion, and graph-conditioned generation remain future-validation interfaces rather than completed evidence in this draft.
These definitions hand Method a fixed contract: spectral tokenization constructs $\Hh_b^{(m)}$, routing estimates $r_{i,b}^{(m)}$ from topology and availability, interaction and fusion produce $\Z_i$, and routed losses act on disjoint evidence roles.

\section{Method}
\label{sec:method}
\subsection{Overview}
\label{sec:method-overview}
SMGFM follows one design principle where graph-frequency roles are separated before multimodal semantics can interact.
This rule directly targets the two limitations identified in the Introduction.
For \textbf{Limitation One}, frequency-first tokenization prevents graph-smooth consensus and modality-private evidence from being mixed by a uniform spatial interaction layer.
For \textbf{Limitation Two}, routed supervision applies different objectives to different evidence roles instead of forcing every modality pair and frequency band into the same alignment target.

\begin{figure*}[t]
    \centering
    \includegraphics[width=\linewidth]{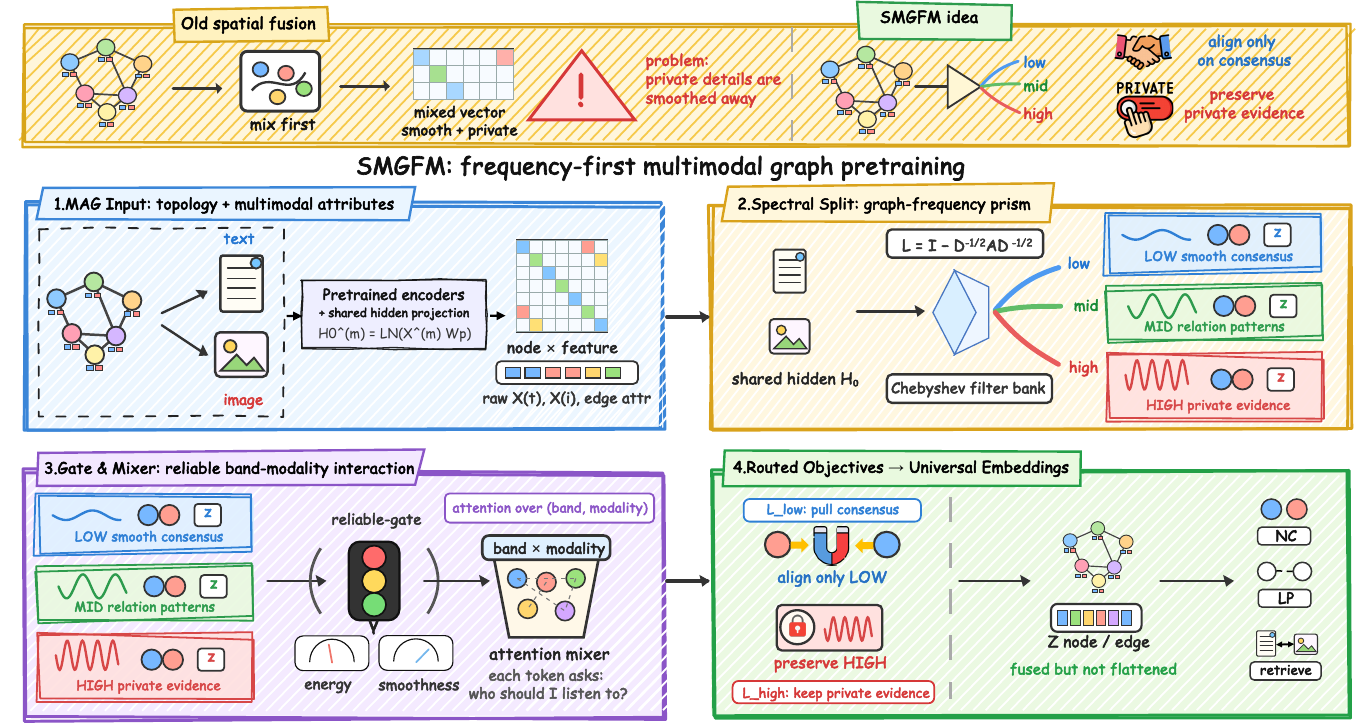}
    \caption{\textbf{SMGFM architecture.}
    SMGFM projects raw modalities to graph signals, constructs band-modality role tokens with Chebyshev filters, routes tokens by topology-conditioned reliability, performs role-constrained interaction and route fusion, and pretrains with frequency-routed objectives.}
    \label{fig:architecture}
\end{figure*}

As shown in Fig.~\ref{fig:architecture}, the method has three coupled modules.
\textbf{Frequency-first modality tokenization} converts each modality into low-, uncertain-, and high-frequency tokens.
\textbf{Topology-conditioned role routing and interaction} estimates which tokens should act as consensus, private, or uncertain evidence before cross-modal attention.
\textbf{Frequency-routed pretraining} aligns consensus routes, preserves private routes, separates their fused representations, and anchors the encoder to observed features and topology.
The result is an encoder that produces node representations $\Z_i$ while retaining modality-specific representations $\Z_i^{(m)}$ for modality-centric evaluation.

\subsection{Frequency-First Modality Tokenization via Chebyshev Filters}
\label{sec:freq-decomp}
\noindent\textbf{Modality projection.}
For a raw modality object $r_i^{(m)}$, SMGFM first obtains a frozen modality feature and projects it to a shared graph-signal space:
\begin{equation}
\X_i^{(m)}=\mathrm{Enc}_m(r_i^{(m)}),\quad
\Hh_{0,i}^{(m)}=\mathrm{LN}\!\left(\X_i^{(m)}\boldsymbol{W}_p^{(m)}+\boldsymbol{b}_p^{(m)}\right).
\label{eq:modality-projection}
\end{equation}
Here $\mathrm{Enc}_m$ may be RoBERTa, CLIP, or a synthetic encoder depending on the dataset, and unavailable modalities are represented by learned mask tokens with availability indicator $\rho_i^{(m)}$.
This step standardizes heterogeneous modality dimensions without making raw encoders part of the reported training budget.

\noindent\textbf{Polynomial filter bank.}
SMGFM then constructs a Chebyshev filter bank on the scaled Laplacian $\Lb$:
\begin{equation}
\begin{aligned}
\boldsymbol{T}_0^{(m)}&=\Hh_0^{(m)},\quad
\boldsymbol{T}_1^{(m)}=\Lb\Hh_0^{(m)},\\
\boldsymbol{T}_k^{(m)}&=2\Lb\boldsymbol{T}_{k-1}^{(m)}-\boldsymbol{T}_{k-2}^{(m)},\quad
2\leq k\leq K,\\
\Hh_b^{(m)}&=\sum_{k=0}^{K}\alpha_{b,k}\boldsymbol{T}_k^{(m)}.
\end{aligned}
\label{eq:band-filter}
\end{equation}
The coefficients $\alpha_{b,k}$ are learned per band $b$.
For the normalized Laplacian, we use $\lambda_{\max}\approx2$, so $\Lb=2\Lm/\lambda_{\max}-\I$ matches the Preliminaries definition and can be applied by sparse graph multiplications~\cite{defferrard2016chebnet}.
Thus the method obtains band tokens without eigendecomposition.
Bands are assigned to $\mathcal{B}_C$, $\mathcal{B}_P$, and $\mathcal{B}_U$: low-frequency tokens are consensus candidates, high-frequency tokens retain modality-private evidence, and uncertain middle bands are resolved by the router.

\subsection{Topology-Conditioned Role Routing}
\label{sec:role-routing}
\noindent\textbf{Routing cues.}
Tokenization exposes frequency roles, but graph frequency alone does not determine whether a token is reliable for a specific node.
For node $i$, band $b$, and modality $m$, SMGFM computes topology and modality cues:
\begin{equation}
\begin{gathered}
e_{i,b}^{(m)}
=
\|\Hh_{i,b}^{(m)}\|_2^2,
\\[2pt]
s_{i,b}^{(m)}
=
\sum_{j\in\mathcal{N}(i)}\A_{ij}
\|\Hh_{i,b}^{(m)}-\Hh_{j,b}^{(m)}\|_2^2,
\\[2pt]
\kappa_{i,b}^{(m)}
=
\frac{1}{|\M|-1}
\sum_{m'\neq m}\operatorname{cos}(\Hh_{i,b}^{(m)},\Hh_{i,b}^{(m')}),
\\[2pt]
\boldsymbol{c}_{i,b}^{(m)}
=
[\bar e_{i,b}^{(m)},\bar s_{i,b}^{(m)},
\rho_i^{(m)},\bar d_i,\bar\kappa_{i,b}^{(m)}],
\\[2pt]
r_{i,b}^{(m)}
=
\sigma\!\left(\psi_\theta(\boldsymbol{c}_{i,b}^{(m)})\right).
\end{gathered}
\label{eq:role-router}
\end{equation}
Here $e$ measures filter response, $s$ measures local smoothness, $\rho$ marks modality availability, $d_i$ is node degree, and $\kappa$ measures cross-modal agreement or conflict.
Bars denote normalized cues within a batch or sampled subgraph.
The score $r_{i,b}^{(m)}$ is a reliability route used before token interaction; it is not a generic modality weight.
Reported instantiations may use a cue subset when a dataset does not expose every cue, but the method principle is to decide token reliability before modalities are mixed.

\noindent\textbf{Why routing precedes interaction.}
Low-frequency agreement can be harmful when it comes from noisy popularity or label-irrelevant homophily, while high-frequency disagreement can be useful when it carries private text or image evidence.
Routing before interaction lets the model suppress unreliable consensus tokens, retain private tokens with strong local evidence, and defer ambiguous $\mathcal{B}_U$ tokens to the fusion layer instead of aligning them prematurely.

\subsection{Role-Constrained Band-Modality Interaction and Fusion}
\label{sec:cross-modal-attn}
\noindent\textbf{Band-modality interaction.}
For token $u=(b,m)$, define the reliability-weighted input
$\boldsymbol{x}_{i,u}=r_{i,b}^{(m)}\Hh_{i,b}^{(m)}$ and the role map $\tau(b)\in\{C,P,U\}$.
SMGFM performs attention over the $B|\M|$ tokens attached to node $i$:
\begin{equation}
\begin{aligned}
\boldsymbol{q}_{i,u}&=\boldsymbol{x}_{i,u}\boldsymbol{W}_Q,\quad
\boldsymbol{k}_{i,u}=\boldsymbol{x}_{i,u}\boldsymbol{W}_K,\quad
\boldsymbol{v}_{i,u}=\boldsymbol{x}_{i,u}\boldsymbol{W}_V,\\
\ell_{i,u,v}&=
\frac{\boldsymbol{q}_{i,u}\boldsymbol{k}_{i,v}^{\top}}{\sqrt{d_h}}
+\eta_{\tau(b),\tau(b')}+\mu_{m,m'}+\log(r_{i,b'}^{(m')}+\epsilon),\\
a_{i,u\to v}&=\softmax_v(\ell_{i,u,v}),\quad
\tilde{\boldsymbol{h}}_{i,u}=\sum_v a_{i,u\to v}\boldsymbol{v}_{i,v}.
\end{aligned}
\label{eq:role-interaction}
\end{equation}
Here $v=(b',m')$.
The role bias $\eta$ learns which role transitions are useful, the modality bias $\mu$ calibrates modality-pair compatibility, and the reliability term downweights tokens that the router judged unavailable or noisy.
Stacking $\tilde{\boldsymbol{h}}_{i,(b,m)}$ over nodes gives $\tilde{\Hh}_b^{(m)}$.
We reserve \textbf{interaction} for this attention over band-modality tokens.

\noindent\textbf{Route fusion.}
Fusion pools interacted tokens by route and then forms the task representation:
\begin{equation}
\begin{aligned}
\boldsymbol{z}_i^C&=\sum_{b\in\mathcal{B}_C}\sum_{m\in\M}
\omega_{i,b}^{(m)}\tilde{\boldsymbol{h}}_{i,(b,m)},\\
\boldsymbol{z}_i^P&=\sum_{b\in\mathcal{B}_P}\sum_{m\in\M}
\omega_{i,b}^{(m)}\tilde{\boldsymbol{h}}_{i,(b,m)},\\
\boldsymbol{z}_i^U&=\sum_{b\in\mathcal{B}_U}\sum_{m\in\M}
\omega_{i,b}^{(m)}\tilde{\boldsymbol{h}}_{i,(b,m)},\\
\Z_i&=\mathrm{MLP}\!\left(\boldsymbol{z}_i^C\Vert\boldsymbol{z}_i^P\Vert\boldsymbol{z}_i^U\right).
\end{aligned}
\label{eq:route-fusion}
\end{equation}
Here $\omega_{i,b}^{(m)}$ is a route-wise softmax.
Consensus fusion supports graph-smooth node and edge decisions; private fusion preserves local modality evidence for retrieval-sensitive and heterophily-sensitive behavior; uncertain fusion lets the model use mid-band evidence without assigning it a fixed role in advance.

\subsection{Frequency-Routed Pretraining}
\label{sec:pretraining}
\noindent\textbf{Consensus alignment.}
Consensus routes should agree across modalities only where graph frequency already suggests shared semantics.
Therefore SMGFM applies InfoNCE on $\mathcal{B}_C$ and does not globally align all tokens~\cite{oord2018representation,radford2021clip}:
\begin{equation}
\Lcal_C=
\sum_{b\in\mathcal{B}_C}\sum_{m\neq m'}
\mathrm{InfoNCE}(\Hh_b^{(m)},\Hh_b^{(m')}).
\label{eq:consensus-loss}
\end{equation}

\noindent\textbf{Private-evidence preservation.}
Private routes should not be collapsed by consensus alignment.
Let $\tilde{\Hh}_b^{(m)}$ be the post-interaction token and $\mathcal{E}_{\Lm}(\X)=\tr(\X^{\top}\Lm\X)$.
SMGFM preserves both Dirichlet energy and feature direction:
\begin{equation} 
\begin{aligned} 
\Lcal_P^{E} &=\sum_{b\in\mathcal{B}_P}\sum_{m\in\M} \left|\mathcal{E}_{\Lm}(\Hh_b^{(m)})-\mathcal{E}_{\Lm}(\tilde{\Hh}_b^{(m)})\right|,\\ \Lcal_P &=\Lcal_P^{E} +\beta_P\sum_{b\in\mathcal{B}_P}\sum_{m\in\M} \left(1-\operatorname{cos}(\mathrm{sg}(\Hh_b^{(m)}),\tilde{\Hh}_b^{(m)})\right). \end{aligned} 
\label{eq:private-preserve} 
\end{equation}
Eq.~\eqref{eq:private-preserve} separates the preservation objective into an energy-drift term and a direction-preservation term: $\Lcal_P^{E}$ controls graph-frequency drift, while the stop-gradient direction term prevents a high-frequency token from matching only in total energy.

\noindent\textbf{Role separation.}
To reduce re-entanglement after fusion, SMGFM penalizes consensus/private correlation:
\begin{equation}
\Lcal_R=
\left\|
\frac{(\Z^C)^{\top}\Z^P}{\|\Z^C\|_F\|\Z^P\|_F+\epsilon}
\right\|_F^2,\quad
\Z^\tau=[\boldsymbol{z}_1^\tau;\ldots;\boldsymbol{z}_{|\V|}^\tau].
\label{eq:role-separation}
\end{equation}

\noindent\textbf{Reconstruction anchors.}
Masked modality reconstruction $\Lcal_{\mathrm{mask}}$ predicts held-out modality features from $\Z_i$, and edge reconstruction $\Lcal_{\mathrm{edge}}$ scores sampled positive/negative edges from endpoint embeddings.
These anchors prevent role routing from becoming a purely contrastive shortcut and keep the encoder tied to both feature and topology evidence.

\noindent\textbf{Full objective and direct token-level routing.}
The final pretraining loss is
\begin{equation}
\Lcal_{\mathrm{core}}=
\Lcal_{\mathrm{mask}}+\lambda_C\Lcal_C+\lambda_P\Lcal_P
+\lambda_R\Lcal_R+\lambda_E\Lcal_{\mathrm{edge}}.
\label{eq:total-loss}
\end{equation}
Because $\Lcal_C$ and $\Lcal_P$ are applied to disjoint band supports,
\begin{equation}
\begin{aligned}
\frac{\partial \Lcal_C}{\partial \Hh_b^{(m)}}&=0,\quad b\notin\mathcal{B}_C,\\
\frac{\partial \Lcal_P}{\partial \Hh_b^{(m)}}&=0,\quad b\notin\mathcal{B}_P,\quad
\mathcal{B}_C\cap\mathcal{B}_P=\varnothing.
\end{aligned}
\label{eq:direct-routing}
\end{equation}
This is direct token-level routing of losses, not full gradient independence: projections, filters, router, mixer, and fusion parameters remain shared and are trained end to end.

\begin{algorithm}[!ht]
    \DontPrintSemicolon
    \SetAlgoCaptionSeparator{.}
    \scriptsize
    \caption{$\kw{SMGFM Pretrain}(\G,\{\X^{(m)}\}_{m\in\M})$}
    \label{alg:smgfm-pretrain}
    \textbf{Input.} MAG $\G=(\V,\E,\A)$, modality features $\{\X^{(m)}\}$, $K,B,p_m$, and $\lambda_C,\lambda_P,\lambda_R,\lambda_E$\;
    \textbf{Output.} Pretrained encoder $f_\theta$\;
    Initialize projections, Chebyshev filters, router, mixer, fusion module, and reconstruction heads; construct $\Lm$ and $\Lb$\;
    \For{each training epoch}{
        Sample $\G_s$, modality masks, edge pairs, and low-route modality pairs\;
        Project each available modality with Eq.~\eqref{eq:modality-projection}; replace missing modalities by mask tokens\;
        Build Chebyshev bases and band tokens with Eq.~\eqref{eq:band-filter}\;
        Compute routing scores $r_{i,b}^{(m)}$ with Eq.~\eqref{eq:role-router}\;
        Interact tokens and fuse routes with Eqs.~\eqref{eq:role-interaction}--\eqref{eq:route-fusion}\;
        Compute Eq.~\eqref{eq:total-loss} and update $\theta$\;
    }
    \Return{$f_\theta$}\;
\end{algorithm}

\subsection{Downstream Adaptation and Reported Instantiations}
\label{sec:downstream-adaptation}
After pretraining, SMGFM uses lightweight heads rather than retraining raw modality encoders.
Node classification applies an MLP probe to $\Z_i$, link prediction scores endpoint pairs from fused embeddings, node clustering runs k-means on frozen $\Z_i$, and retrieval compares modality-specific representations $\Z_i^{(m)}$.
Controlled mechanism tests instantiate the neural pipeline above.
For the public real-data subsets, \textbf{SMGFM-Expert} is reported as a matched-protocol spectral expert bank: it builds raw, low-pass, high-pass, and cross-modal agreement/disagreement experts, then selects or gates experts on validation data for each local task.
This instantiation provides frequency-aware evidence under the matched local protocol; it is not presented as a new full benchmark-scale pretraining run.

\begin{table*}[t]
\centering
\setlength{\abovecaptionskip}{0.2cm}
\caption{Experimental dataset statistics and evidence roles. Controlled MAG supports mechanism diagnostics, public real-data subsets support matched graph-task validation after local 6K-node preprocessing, and Goodreads-Cover is a pilot sanity check.}
\footnotesize
\label{tab:dataset-details}
\renewcommand{\arraystretch}{1.18}
\resizebox{0.9\textwidth}{!}{
\setlength{\tabcolsep}{2mm}{
\begin{tabular}{ccccccc}
\toprule
Dataset & \#Nodes & \#Edges & Modality Features & Train/Val/Test & Task & Description \\
\midrule[0.3pt]
Controlled MAG & 500 & generated & 2$\times$128 & 60\%/20\%/20\% & Diagnostic & synthetic mechanism test \\
\midrule[0.3pt]
Movies & 6,000 & 21,216 & RoBERTa, CLIP & 60\%/20\%/20\% & Transductive & product graph subset \\
Grocery & 6,000 & 17,925 & RoBERTa, CLIP & 60\%/20\%/20\% & Transductive & product graph subset \\
Toys & 6,000 & 9,500 & RoBERTa, CLIP & 60\%/20\%/20\% & Transductive & product graph subset \\
\midrule[0.3pt]
Reddit-S & 6,000 & 41,413 & RoBERTa, CLIP & 60\%/20\%/20\% & Transductive & social graph subset \\
Reddit-M & 6,000 & 2,220 & RoBERTa, CLIP & 60\%/20\%/20\% & Transductive & social graph subset \\
\midrule[0.3pt]
Goodreads-Cover & 600 & 542 directed & metadata, cover & 60\%/20\%/20\% & Pilot & book graph sanity check \\
\bottomrule
\end{tabular}
}}
\end{table*}

\begin{table}[t]
\centering
\caption{Fine-tuning and evaluation protocols after SMGFM pretraining. Raw modality encoders stay frozen, and only task heads, light projections, or clustering assignments are fitted.}
\label{tab:finetune}
\footnotesize
\setlength{\tabcolsep}{2pt}
\renewcommand{\arraystretch}{1.18}
\begin{tabularx}{\linewidth}{@{}>{\raggedright\arraybackslash}p{0.22\linewidth}>{\centering\arraybackslash}p{0.14\linewidth}YY@{}}
\toprule
Task & Input & Trainable and frozen parts & Metrics and split \\
\midrule
Node classification & $\Z_i$ & MLP probe or adapter; raw encoders frozen & Acc., Macro-F1 on train/val/test split \\
Link prediction & $(\Z_i,\Z_j)$ & Edge decoder; raw encoders frozen & MRR, Hits@3, AUC, AP on edge split \\
Node clustering & $\Z_i$ & k-means only; all encoder modules frozen & NMI, ARI with labels for evaluation only \\
Retrieval & $\Z_i^{t},\Z_i^{v}$ & No head or light projection; graph encoder frozen & R@K, MRR on paired nodes \\
\bottomrule
\end{tabularx}
\end{table}

\subsection{Theoretical Analysis and Complexity}
\label{sec:theoretical-analysis}
We give sufficient-condition support for low-band alignment, high-band preservation, and reliability-weighted fusion; this is not a global optimality or convergence proof.
Let $\Lm=\Ut\Lambdab\Ut^{\top}$ and define
\begin{equation}
\begin{aligned}
\widehat{\X}&=\Ut^{\top}\X,\quad
\mathcal{E}_{\Lm}(\X)=\sum_r\lambda_r\|\widehat{\X}_r\|_2^2,\\
g_b(\lambda)&=\sum_{k=0}^{K}\alpha_{b,k}T_k(2\lambda/\lambda_{\max}-1).
\end{aligned}
\label{eq:theory-setup}
\end{equation}
Assume the diagnostic model $\Hh_0^{(m)}=\boldsymbol{S}+\boldsymbol{P}^{(m)}+\boldsymbol{N}^{(m)}$, where $\boldsymbol{S}$ is low-frequency shared evidence, $\boldsymbol{P}^{(m)}$ is modality-private evidence, and $\boldsymbol{N}^{(m)}$ is residual noisy evidence.

\begin{proposition}[Consensus leakage control]
For $b\in\mathcal{B}_C$, let $\epsilon_b^{H}=\sup_{\lambda\in[\lambda_c,2]}|g_b(\lambda)|$.
If $\boldsymbol{P}^{(m)}+\boldsymbol{N}^{(m)}$ is supported on $[\lambda_c,2]$, then $\|g_b(\Lm)(\boldsymbol{P}^{(m)}+\boldsymbol{N}^{(m)})\|_2\leq\epsilon_b^{H}\|\boldsymbol{P}^{(m)}+\boldsymbol{N}^{(m)}\|_2$.
\end{proposition}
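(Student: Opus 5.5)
The plan is to work entirely in the Laplacian eigenbasis fixed in Eq.~\eqref{eq:theory-setup}. Because $\Lm$ is real symmetric, $\Lm=\Ut\Lambdab\Ut^{\top}$ with $\Ut$ orthogonal and eigenvalues $\lambda_r\in[0,2]$ (normalized Laplacian). Each Chebyshev term satisfies $T_k(\Lb)=\Ut\,T_k(2\Lambdab/\lambda_{\max}-\I)\,\Ut^{\top}$, since $\Lb$ is an affine function of $\Lm$ and polynomials in a diagonalizable matrix act diagonally on its eigenvectors. Summing these terms gives $g_b(\Lm)=\Ut\, g_b(\Lambdab)\,\Ut^{\top}$, where $g_b(\Lambdab)=\mathrm{diag}(g_b(\lambda_r))$ and $g_b(\lambda)$ is the scalar response defined in Eq.~\eqref{eq:theory-setup}.

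Next, set $\boldsymbol{Q}=\boldsymbol{P}^{(m)}+\boldsymbol{N}^{(m)}$ and $\widehat{\boldsymbol{Q}}=\Ut^{\top}\boldsymbol{Q}$. I read ``supported on $[\lambda_c,2]$'' as saying that the row $\widehat{\boldsymbol{Q}}_r$ vanishes whenever $\lambda_r<\lambda_c$. Then $\Ut^{\top}g_b(\Lm)\boldsymbol{Q}=g_b(\Lambdab)\widehat{\boldsymbol{Q}}$, so its $r$-th row is $g_b(\lambda_r)\widehat{\boldsymbol{Q}}_r$. This row is zero off the support, and on the support its norm is at most $\epsilon_b^{H}\|\widehat{\boldsymbol{Q}}_r\|_2$ by the definition of the supremum.

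To finish, I square, sum over $r$, and use orthogonal invariance $\|\Ut\boldsymbol{Y}\|=\|\boldsymbol{Y}\|$:
\[
\|g_b(\Lm)\boldsymbol{Q}\|^2=\sum_{r:\lambda_r\ge\lambda_c}g_b(\lambda_r)^2\|\widehat{\boldsymbol{Q}}_r\|_2^2\le(\epsilon_b^{H})^2\|\widehat{\boldsymbol{Q}}\|^2=(\epsilon_b^{H})^2\|\boldsymbol{Q}\|^2 .
\]
Taking square roots gives the claim. The row-wise sum above is in the Frobenius norm. If $\|\cdot\|_2$ is meant as the spectral (operator) norm, the same bound follows by applying the vector argument column by column, or equivalently by noting $g_b(\Lm)\boldsymbol{Q}=g_b(\Lm)\Pi\boldsymbol{Q}$, where $\Pi$ is the spectral projector onto eigenvalues in $[\lambda_c,2]$, and $\|g_b(\Lm)\Pi\|_{\mathrm{op}}\le\epsilon_b^{H}$.

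The main obstacle is not computational but interpretive. First, the norm must be fixed; the Frobenius reading is the natural one given the Dirichlet-energy formulation. Second, the support hypothesis must be pinned down precisely. Third, I must confirm that $\lambda_{\max}$ in $\Lb$ bounds the true spectrum, so that evaluating $g_b$ on $[\lambda_c,2]$ covers every relevant $\lambda_r$. With the paper's choice $\lambda_{\max}\approx2$ and $\mathrm{spec}(\Lm)\subseteq[0,2]$, this holds. If $\lambda_{\max}$ were only an estimate, I would note that the supremum is taken over the actual eigenvalues, which still lie in $[\lambda_c,2]$, so the bound is unaffected.
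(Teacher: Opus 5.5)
Your proposal is correct and follows the same route as the paper's sketch. You diagonalize $g_b(\Lm)=\Ut\, g_b(\Lambdab)\,\Ut^{\top}$ via the spectral theorem and bound its gain on the spectral subspace for $[\lambda_c,2]$ by $\epsilon_b^{H}$; your row-wise Frobenius computation and orthogonal invariance fill in details the paper leaves implicit.

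One small precision: for the operator-norm reading, use the projector argument $\|g_b(\Lm)\Pi\|\le\epsilon_b^{H}$, or equivalently the vector bound applied to every $\boldsymbol{Q}x$. A column-by-column bound by itself only controls the Frobenius norm.
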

\noindent\textit{Proof sketch and takeaway.}
This formalizes the intuition that low-band alignment is reliable only when the selected filter has small high-frequency gain.
By the spectral theorem, $g_b(\Lm)=\Ut g_b(\Lambdab)\Ut^{\top}$ has induced norm at most $\epsilon_b^H$ on $[\lambda_c,2]$, so $\Lcal_C$ is well targeted only when non-consensus leakage is suppressed, as Q2 probes through low-band agreement.

\begin{proposition}[Private-route preservation and disjoint supervision]
For $b\in\mathcal{B}_P$, let $\epsilon_b^{L}=\sup_{\lambda\in[0,\lambda_c]}|g_b(\lambda)|$.
If $\boldsymbol{S}$ is supported on $[0,\lambda_c]$, then $\|g_b(\Lm)\boldsymbol{S}\|_2\leq\epsilon_b^{L}\|\boldsymbol{S}\|_2$.
Moreover, when $\mathcal{B}_C\cap\mathcal{B}_P=\varnothing$, Eq.~\eqref{eq:direct-routing} applies $\Lcal_C$ and $\Lcal_P$ to disjoint band tokens while shared modules remain end-to-end trainable.
\end{proposition}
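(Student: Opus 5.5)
The argument has two parts: a spectral norm bound that mirrors the proof sketch of the first proposition, and a purely structural statement about which tokens each loss touches.

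\textbf{Spectral bound.} Write the eigendecomposition $\Lm=\Ut\Lambdab\Ut^{\top}$ with $\Ut$ orthogonal. Every polynomial in $\Lm$ diagonalizes in the same basis: $T_k(\Lb)=\Ut\,T_k(2\Lambdab/\lambda_{\max}-\I)\,\Ut^{\top}$. By linearity, $g_b(\Lm)=\Ut\,g_b(\Lambdab)\,\Ut^{\top}$, where $g_b(\lambda)$ is the scalar polynomial of Eq.~\eqref{eq:theory-setup}.

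The hypothesis that $\boldsymbol{S}$ is supported on $[0,\lambda_c]$ means the spectral coefficients $\widehat{\boldsymbol{S}}_r=(\Ut^{\top}\boldsymbol{S})_r$ vanish whenever $\lambda_r>\lambda_c$. Orthogonality of $\Ut$ then gives
\[
\|g_b(\Lm)\boldsymbol{S}\|^2=\sum_{r:\lambda_r\le\lambda_c}|g_b(\lambda_r)|^2\|\widehat{\boldsymbol{S}}_r\|_2^2\le(\epsilon_b^{L})^2\sum_r\|\widehat{\boldsymbol{S}}_r\|_2^2=(\epsilon_b^{L})^2\|\boldsymbol{S}\|^2 .
\]
Here the norm on matrix-valued signals is read as the Frobenius norm, which is what the computation uses. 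The same computation applies column by column, so the bound also holds for each column in $\ell_2$. Taking square roots gives the claim.

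The step needing most care is the interval bookkeeping. The eigenvalues of the normalized Laplacian lie in $[0,2]$, and the paper sets $\lambda_{\max}\approx 2$. I would state the bound with the actual $\lambda_{\max}$ used in $\Lb$. The supremum is taken over the real interval $[0,\lambda_c]$, so it dominates the values at the finitely many eigenvalues that actually occur, and no approximation is needed. If the approximation $\lambda_{\max}=2$ underestimates the true spectral radius, I would note that the argument still holds provided the interval in the definition of $\epsilon_b^{L}$ is taken in the variable $\lambda$ itself rather than in the rescaled variable.

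\textbf{Disjoint supervision.} By Eq.~\eqref{eq:consensus-loss}, $\Lcal_C$ is a function only of $\{\Hh_b^{(m)}\}_{b\in\mathcal{B}_C}$. By Eq.~\eqref{eq:private-preserve}, $\Lcal_P$ is a function only of $\{\Hh_b^{(m)},\tilde{\Hh}_b^{(m)}\}_{b\in\mathcal{B}_P}$. Treating each token as an independent argument, the partial derivatives of $\Lcal_C$ with respect to tokens outside $\mathcal{B}_C$ are zero, and likewise for $\Lcal_P$ outside $\mathcal{B}_P$; this is exactly Eq.~\eqref{eq:direct-routing}. When $\mathcal{B}_C\cap\mathcal{B}_P=\varnothing$, no band token receives a direct gradient from both losses.

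I would then point out that this independence is only at the token level. The chain rule through the shared parameters $\alpha_{b,k}$, $\boldsymbol{W}_p^{(m)}$, the router, and the mixer still couples the two losses, which is why the statement claims only that the shared modules remain end-to-end trainable. I would also flag one subtlety: $\tilde{\Hh}_b^{(m)}$ for $b\in\mathcal{B}_P$ depends, through the attention in Eq.~\eqref{eq:role-interaction}, on consensus tokens. So the disjointness claim must be stated for direct partial derivatives with respect to the token arguments, not for total derivatives.
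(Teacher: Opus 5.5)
Your proposal is correct and follows essentially the paper's route. The spectral theorem gives $g_b(\Lm)=\Ut g_b(\Lambdab)\Ut^{\top}$, so the filter's gain on the spectral support of $\boldsymbol{S}$ is at most $\epsilon_b^{L}$, and Eq.~\eqref{eq:direct-routing} is read as vanishing direct token-level partial derivatives rather than total derivatives. Your eigen-coefficient computation, and your remark that attention makes $\tilde{\Hh}_b^{(m)}$ for $b\in\mathcal{B}_P$ depend on consensus tokens, simply make precise what the paper's sketch calls ``direct token-level routing of losses, not full gradient independence.''
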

\noindent\textit{Proof sketch and takeaway.}
The same spectral-norm argument bounds smooth leakage, while Eq.~\eqref{eq:direct-routing} gives zero direct loss derivatives outside each assigned band set.
Thus high-band preservation prevents private evidence from collapsing into smooth consensus; when Eq.~\eqref{eq:private-preserve} is small, its energy and direction terms limit high-route drift, motivating the Q2/Q3 NoPres diagnostics.

\begin{proposition}[Reliability-weighted fusion]
Let each interacted token estimate a target route signal as $\boldsymbol{y}_u=\boldsymbol{y}^{\star}+\boldsymbol{\xi}_u$, where token errors are zero-mean, mutually uncorrelated, and $\mathbb{E}\|\boldsymbol{\xi}_u\|_2^2\leq\sigma_u^2$.
For nonnegative fusion weights $\sum_u\omega_u=1$, $\mathbb{E}\|\sum_u\omega_u\boldsymbol{y}_u-\boldsymbol{y}^{\star}\|_2^2\leq\sum_u\omega_u^2\sigma_u^2$.
\end{proposition}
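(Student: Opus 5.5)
The plan is the standard bias--variance computation for a convex combination of noisy estimators. I will treat the weights $\omega_u$ as deterministic, or at least independent of the errors $\boldsymbol{\xi}_u$. Under that assumption, the constraint $\sum_u\omega_u=1$ lets me rewrite the fused error exactly as a weighted sum of token errors:
\begin{equation*}
\sum_u\omega_u\boldsymbol{y}_u-\boldsymbol{y}^{\star}
=\sum_u\omega_u(\boldsymbol{y}^{\star}+\boldsymbol{\xi}_u)-\Big(\sum_u\omega_u\Big)\boldsymbol{y}^{\star}
=\sum_u\omega_u\boldsymbol{\xi}_u .
\end{equation*}
This is the only place normalization is used. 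It removes any bias term, so the target signal $\boldsymbol{y}^{\star}$ drops out entirely.

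Next I expand the squared norm as a double sum of inner products:
\begin{equation*}
\Big\|\sum_u\omega_u\boldsymbol{\xi}_u\Big\|_2^2=\sum_u\omega_u^2\|\boldsymbol{\xi}_u\|_2^2+\sum_{u\neq v}\omega_u\omega_v\langle\boldsymbol{\xi}_u,\boldsymbol{\xi}_v\rangle .
\end{equation*}
Taking expectations, linearity moves the deterministic weights outside. The cross terms vanish because the errors are mutually uncorrelated and zero-mean, which gives $\mathbb{E}\langle\boldsymbol{\xi}_u,\boldsymbol{\xi}_v\rangle=0$ for $u\neq v$. The diagonal terms are bounded by $\omega_u^2\sigma_u^2$ via the assumed second-moment bound. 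Nonnegativity of $\omega_u$ is not needed for this step, since the $\omega_u^2$ are nonnegative automatically; it matters only for interpreting the weights as a reliability distribution.

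The main obstacle is making the hypotheses precise rather than the algebra itself. Two points need care.

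\textbf{Meaning of ``uncorrelated'' for vectors.} It must mean $\mathbb{E}[\boldsymbol{\xi}_u^{\top}\boldsymbol{\xi}_v]=0$, or more strongly a vanishing cross-covariance matrix. I will state it in the inner-product form, which is exactly what the expansion uses.

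\textbf{Data-dependent weights.} In SMGFM, $\omega_{i,b}^{(m)}$ is a softmax computed from the same tokens, so it is generally random. For that case I would either condition on the weights, assuming the errors are conditionally zero-mean and uncorrelated given $\omega$, and apply the tower property, or simply state the result for weights fixed independently of $\boldsymbol{\xi}$.

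As a closing remark, minimizing the bound over the simplex gives $\omega_u\propto\sigma_u^{-2}$, with value $(\sum_u\sigma_u^{-2})^{-1}$. This justifies why routing toward reliable tokens helps.
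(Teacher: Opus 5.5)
Your proof is correct and follows the paper's own sketch. Both use $\sum_u\omega_u=1$ to reduce the fused error to $\sum_u\omega_u\boldsymbol{\xi}_u$, expand the squared norm, kill the cross terms by zero-mean uncorrelatedness, and bound the diagonal terms by $\sigma_u^2$. Your caveat is also apt: the weights must be fixed, or the bound must be read conditionally on $\omega$ (with the second-moment bound also holding conditionally), since the softmax weights in SMGFM are data-dependent. The paper's sketch leaves that point implicit.
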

\noindent\textit{Proof sketch and takeaway.}
Expanding the squared error cancels cross terms by zero mean and uncorrelated errors.
Reliability-weighted fusion is useful only when larger routing weights favor lower-error tokens; Q4 treats this correlation as a boundary condition, not an arbitrary-shift claim.

\noindent\textbf{Complexity.}
Let $n=|\V|$, $e=|\E|$, $M=|\M|$, $d$ be hidden width, $L$ propagation layers, $K$ Chebyshev order, and $B$ frequency bands.
Let $\gamma=BM$ denote the number of band-modality routes per node.
Because the efficiency question concerns multimodal graph encoders, Table~\ref{tab:complexity} compares multimodal GNN baselines and the spatial-fusion control used in Sec.~\ref{sec:experimental-setups}.
\begin{table}[H]
\centering
\setlength{\abovecaptionskip}{1pt}
\setlength{\belowcaptionskip}{-4pt}
\caption{Complexity of multimodal graph encoders.}
\label{tab:complexity}
\footnotesize
\setlength{\tabcolsep}{1.6pt}
\renewcommand{\arraystretch}{0.9}
\resizebox{0.9\columnwidth}{!}{
\begin{tabular}{@{}lll@{}}
\toprule
Method & Inference Efficiency & Memory Size \\
\midrule
MMGCN & $\mathcal{O}(ML(ed+nd^2))$ & $\mathcal{O}(MLnd+e)$ \\
MGAT & $\mathcal{O}(MLed)$ & $\mathcal{O}(ML(nd+e))$ \\
SMGFM-Spatial & $\mathcal{O}(Med+nM^2d)$ & $\mathcal{O}(Mnd+e)$ \\
SMGFM & $\mathbf{\mathcal{O}(MKed+MBKnd+n\gamma^2d)}$ & $\mathbf{\mathcal{O}(\gamma nd+e)}$ \\
\bottomrule
\end{tabular}}
\end{table}
The table should be read as a route-level accounting rather than a claim that SMGFM is cost-free relative to spatial fusion.
The term $MKed$ comes from applying $K$-order sparse Chebyshev recursions to each modality, and $MBKnd$ accounts for band-wise projection and route construction.
The only quadratic term is $n\gamma^2d$: it compares the $\gamma=BM$ band-modality routes attached to the same node.
Thus, when $B$, $K$, and $M$ are fixed design constants, SMGFM scales as $\mathcal{O}(e+n)$ with graph size under feature inference process, while avoiding dense node-pair interaction such as $\mathcal{O}(n^2d)$.

This comparison also explains where SMGFM spends computation.
Compared with MMGCN and MGAT, it replaces repeated modality-wise graph propagation or edge interaction with sparse spectral tokenization followed by local route interaction.
Compared with SMGFM-Spatial, it pays explicit $B$ and $K$ factors so that graph-smooth consensus and modality-private evidence are separated before interaction.
This is the intended trade-off: the additional route budget is not an incidental overhead, but the mechanism that makes frequency-routed alignment and preservation testable in Q2 and Q3.

Training follows the same forward-order terms plus lightweight objective costs on selected routes.
Low-band alignment acts only on $\mathcal{B}_C$ tokens, high-band preservation acts only on $\mathcal{B}_P$ tokens, and reconstruction uses the retained route representations; these losses add route-linear costs dominated by tokenization and interaction when $B$ and $M$ are small.
The memory expression $\mathcal{O}(\gamma nd+e)$ stores band-modality tokens and sparse graph structure, while mini-batch or subgraph execution replaces $n,e$ with the active batch statistics.
Consequently, the efficiency claim is a complexity-level scalability argument under fixed features and controlled route counts, not a measured speedup claim.

\section{Experiments}
In this section, we provide the detailed experimental evaluation on SMGFM using the publicly available datasets. 
This section answers four questions:
\textbf{Q1}: Does SMGFM provide matched graph-task utility and controlled retrieval-sensitive binding?
\textbf{Q2}: Do spectral diagnostics show that routed objectives affect their intended frequency roles?
\textbf{Q3}: Which modules and hyperparameters are necessary?
\textbf{Q4}: What boundaries appear under missing modalities, heterophily, and efficiency constraints?

\subsection{Experimental Setups}
\label{sec:experimental-setups}
\textbf{Datasets.}
We evaluate SMGFM on real-world multimodal-attributed graph datasets and the detailed statistics of datasets is presented in Table~ref{tab:dataset-details}. Specifically, the controlled datasets refers on several table refers to those that are designed to assess whether the model can distinguish graph-smooth semantics from modality-specific high-frequency information under heterophily and missing-modality settings~\cite{2020h2gcn,ma2021smil,ma2022missingtransformer,wang2023shaspec}. For real-world evaluation, we use public multimodal graph subsets, include Movies, Grocery, Toys, Reddit-S, and Reddit-M, where node attributes are represented by RoBERTa and CLIP features~\cite{zhu2025mmgraph,yan2024multimodalbenchmark,wan2026openmag,roberta,radford2021clip}. All datasets used in this study are publicly available, and the implementation of SMGFM is released at \url{https://anonymous.4open.science/r/SMGFM-4749}.

\textbf{Baselines and probes.}
Controlled variants isolate mechanisms: \textbf{SMGFM-Spatial} removes frequency tokenization, \textbf{SMGFM-NoAlign} removes low-band alignment, \textbf{SMGFM-NoPres} removes high-band preservation, \textbf{SMGFM-Uni} removes cross-modal binding, and \textbf{SMGFM-Global} uses global alignment.

\subsection{Performance Comparison}
\label{sec:rq1}
To answer \textbf{Q1}, Table~\ref{tab:realdata-baselines} and Fig.~\ref{fig:synthetic} separate matched graph-task utility from controlled cross-modal binding.

\textbf{Matched real-data graph tasks.}
Table~\ref{tab:realdata-baselines} shows a consistent matched-protocol pattern: under the same experimental settings, SMGFM-Expert takes the leading position on the reported product-graph node-classification tasks and gives the clearest advantage on the Grocery link-prediction task.
This pattern is important because the comparison does not change the input encoders or evaluation protocol; the difference comes from how spectral experts expose graph-smooth and modality-dependent evidence to the downstream probe.
The clustering columns add a useful boundary rather than a contradiction.
When frozen RoBERTa/CLIP features already form strong semantic clusters, SMGFM preserves that structure and ties the strongest raw-feature behavior instead of forcing an unnecessary spectral reshaping.
Thus Table~\ref{tab:realdata-baselines} supports matched real-data graph-task utility under the reported local protocol, not a full benchmark leaderboard claim.

\textbf{Controlled retrieval-sensitive binding.}
Fig.~\ref{fig:synthetic} gives the mechanism counterpart: on the clean controlled graph, SMGFM clearly separates from spatial fusion and the unimodal variant on text-image retrieval.
The separation indicates that retrieval-sensitive binding benefits when topology and modalities are routed before fusion, because cross-modal evidence can interact after its graph-frequency role has been identified.
The boundary is equally important: spatial smoothing remains stronger for highly graph-smooth node/link probes, so Q1 supports controlled cross-modal binding rather than dominance on every smooth-label task.

\begin{table*}[t]
\centering
\caption{Performance Comparison on Real-Data Multimodal Datasets. Bold indicate the \textbf{Best Performance}.}
\label{tab:realdata-baselines}
\footnotesize
\setlength{\tabcolsep}{2pt}
\renewcommand{\arraystretch}{1.10}
\resizebox{\textwidth}{!}{
\begin{tabular}{lcccccccccc}
\toprule
Method & Movies Acc. & Movies F1 & Grocery Acc. & Grocery F1 & Movies MRR & Movies H@3 & Grocery MRR & Grocery H@3 & Toys NMI/ARI & Reddit-M NMI/ARI \\
\midrule
MLP & $46.11_{\scriptstyle\pm0.39}$ & $39.59_{\scriptstyle\pm1.85}$ & $74.97_{\scriptstyle\pm0.26}$ & $66.71_{\scriptstyle\pm0.25}$ & $26.71_{\scriptstyle\pm3.50}$ & $29.65_{\scriptstyle\pm1.23}$ & $15.90_{\scriptstyle\pm2.22}$ & $21.65_{\scriptstyle\pm1.56}$ & $\mathbf{28.76}_{\scriptstyle\pm0.19}/\mathbf{14.28}_{\scriptstyle\pm0.20}$ & $\mathbf{65.26}_{\scriptstyle\pm0.31}/\mathbf{42.19}_{\scriptstyle\pm1.30}$ \\
GPR-GNN & $47.69_{\scriptstyle\pm1.01}$ & $42.93_{\scriptstyle\pm1.38}$ & $77.47_{\scriptstyle\pm0.41}$ & $69.98_{\scriptstyle\pm1.14}$ & $61.34_{\scriptstyle\pm6.04}$ & $82.41_{\scriptstyle\pm4.59}$ & $50.02_{\scriptstyle\pm4.49}$ & $71.87_{\scriptstyle\pm4.64}$ & $27.25_{\scriptstyle\pm0.41}/5.35_{\scriptstyle\pm0.24}$ & $47.24_{\scriptstyle\pm0.70}/15.65_{\scriptstyle\pm0.45}$ \\
SMGFM-Expert & $\mathbf{49.72}_{\scriptstyle\pm0.99}$ & $\mathbf{44.08}_{\scriptstyle\pm0.83}$ & $\mathbf{78.22}_{\scriptstyle\pm0.64}$ & $\mathbf{70.03}_{\scriptstyle\pm1.81}$ & $\mathbf{62.42}_{\scriptstyle\pm10.00}$ & $\mathbf{88.96}_{\scriptstyle\pm4.24}$ & $\mathbf{74.77}_{\scriptstyle\pm14.94}$ & $\mathbf{77.40}_{\scriptstyle\pm15.62}$ & $\mathbf{28.76}_{\scriptstyle\pm0.19}/\mathbf{14.28}_{\scriptstyle\pm0.20}$ & $\mathbf{65.26}_{\scriptstyle\pm0.31}/\mathbf{42.19}_{\scriptstyle\pm1.30}$ \\
\bottomrule
\end{tabular}}
\end{table*}

\begin{figure}[!t]
    \centering
    \includegraphics[width=\linewidth]{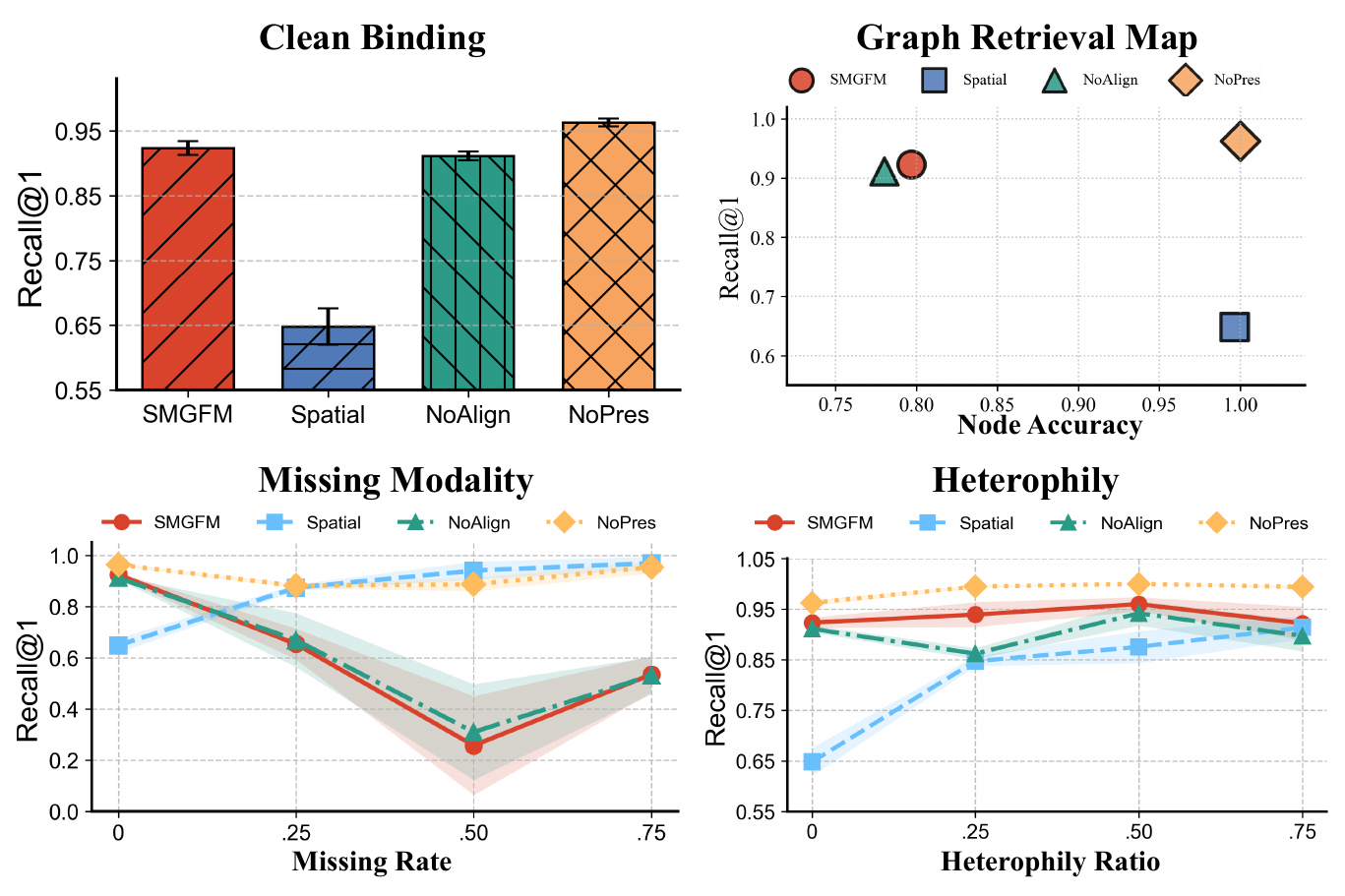}
    \caption{\textbf{Controlled benchmark and stress tests.}
    Clean binding shows higher text-image retrieval than spatial fusion, the graph-retrieval map shows the trade-off against smooth-label accuracy, missing-modality stress exposes a current weakness, and heterophily stress shows stable retrieval under edge conflict.}
    \label{fig:synthetic}
\end{figure}

\subsection{Spectral Diagnostic Analysis}
\label{sec:spectral-diagnostics}
To answer \textbf{Q2}, Fig.~\ref{fig:mechanism} and Table~\ref{tab:retrieval-stability} test whether routed objectives leave the intended spectral fingerprints.
Low-band cosine measures smooth-channel agreement; high-band drift measures private-frequency energy change after interaction and fusion.

\textbf{Low-band alignment.}
Removing low-band alignment nearly erases the smooth-channel agreement pattern in Fig.~\ref{fig:mechanism}.
Because $\Lcal_C$ acts only on low-frequency tokens, the diagnostic supports graph-smooth consensus alignment rather than generic representation similarity.

\textbf{High-band preservation.}
The preservation diagnostic explains why the strongest clean retrieval entry is not automatically the best mechanism.
NoPres obtains the most aggressive retrieval behavior in the clean setting, but Table~\ref{tab:retrieval-stability} shows that it also produces the largest high-band drift.
SMGFM keeps retrieval strong while avoiding that drift-heavy shortcut, which is the behavior expected when private high-frequency evidence is preserved instead of rewritten during interaction and fusion.
Overall, Q2 supports separate roles for low-band consensus and high-band private evidence, but remains a mechanism diagnostic rather than a robustness guarantee.
\begin{figure}[H]
    \centering
    \includegraphics[width=\linewidth]{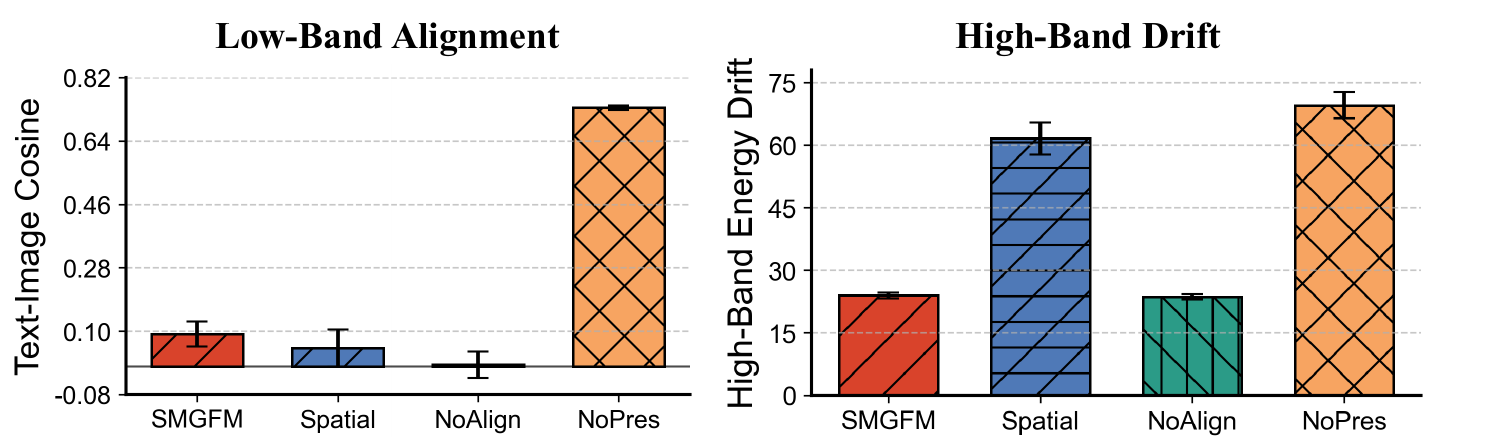}
    \caption{\textbf{Frequency-mechanism diagnostics.}
    Low-band alignment reports paired text-image cosine in the smooth channel, while high-band drift reports the change in private high-frequency Dirichlet energy after fusion.}
    \label{fig:mechanism}
\end{figure}

\begin{table}[H]
\centering
\caption{Retrieval and high-band stability on the controlled clean setting. The strongest clean-retrieval variant also has the largest drift.}
\label{tab:retrieval-stability}
\footnotesize
\setlength{\tabcolsep}{3pt}
\renewcommand{\arraystretch}{1.08}
\begin{tabularx}{\linewidth}{lccY}
\toprule
Variant & Recall@1 & Drift & Interpretation \\
\midrule
SMGFM & $0.923_{\scriptstyle \pm 0.01}$ & $24.0_{\scriptstyle \pm 0.7}$ & balanced retrieval with controlled drift \\
Spatial & $0.648_{\scriptstyle \pm 0.03}$ & $61.6_{\scriptstyle \pm 3.8}$ & weak binding after spatial fusion \\
NoAlign & $0.911_{\scriptstyle \pm 0.01}$ & $23.6_{\scriptstyle \pm 0.7}$ & stable but weak low-band consensus \\
NoPres & $0.963_{\scriptstyle \pm 0.01}$ & $69.6_{\scriptstyle \pm 3.2}$ & highest retrieval with largest drift \\
\bottomrule
\end{tabularx}
\end{table}

\subsection{Ablation and Sensitivity Analysis}
\label{sec:rq3}
To answer \textbf{Q3}, Fig.~\ref{fig:ablation} tests whether each module controls the evidence role assigned to it, and whether sensitivity curves reveal stability trade-offs rather than monotonic gains.

\textbf{Core ablations.}
The core ablations follow the expected method ordering.
Removing spectral decomposition makes the model behave much closer to the spatial-fusion variant, identifying frequency-first tokenization as the main binding control.
NoAlign primarily weakens the consensus diagnostic, whereas NoPres primarily weakens the stability diagnostic.
This division matters because each failure mode maps back to a specific design choice: low-band supervision organizes graph-smooth agreement, while high-band preservation prevents private evidence from being overwritten.
SMGFM-Global shows that broad alignment can still produce strong retrieval, but routed supervision is more interpretable because each diagnostic maps back to a specific frequency role.

\textbf{Sensitivity.}
The sensitivity panels show that more spectral capacity is not automatically better.
Using more bands fragments the controlled graph signal and raises memory use, indicating that a small graph can become harder to route when spectral capacity is over-partitioned.
Changing Chebyshev order and loss weights similarly moves retrieval, agreement, and drift in different directions rather than along a single dominant axis.
The useful pattern is a retrieval-stability balance, not a monotonic capacity story.

\begin{figure}[H]
    \centering
    \includegraphics[width=\linewidth]{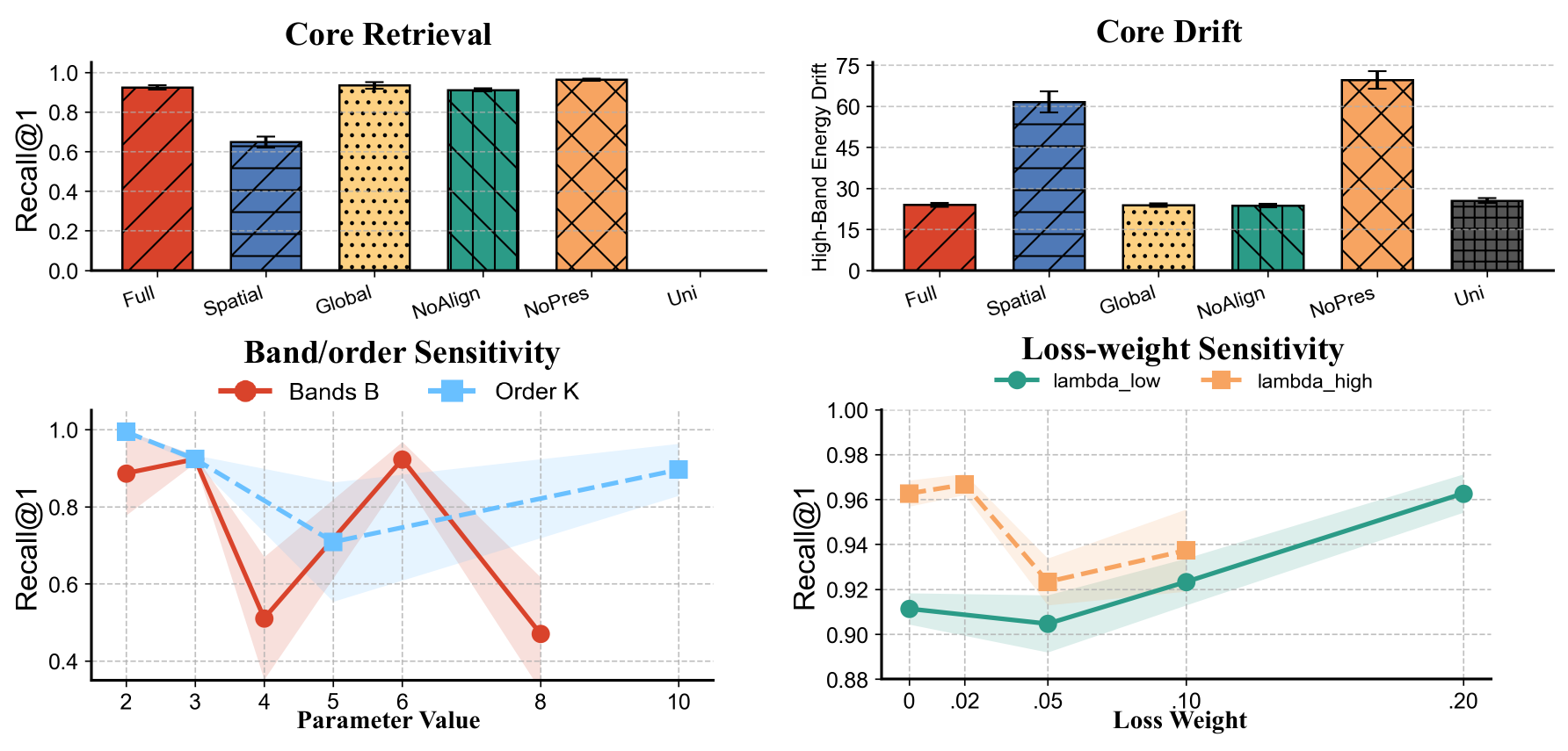}
    \caption{\textbf{Ablation and sensitivity suite.}
    Core panels test spectral decomposition, low-band alignment, and high-band preservation; sensitivity panels show retrieval-stability trade-offs.}
    \label{fig:ablation}
\end{figure}

\subsection{Stress and Efficiency Boundaries}
\label{sec:rq4}
To answer \textbf{Q4}, we treat missing modalities, heterophily, and efficiency as boundary checks, not broad deployment or robustness claims.

\textbf{Missing modalities.}
The missing-modality test exposes a current weakness: retrieval degrades sharply as paired evidence is removed.
This suggests that routed objectives need enough paired modality evidence to form compatible consensus and private routes; future completion should add frequency-aware imputation or stronger modality dropout.

\textbf{Heterophily.}
Under controlled heterophily, SMGFM stays stable across the tested edge-conflict settings, while SMGFM-Spatial remains lower at matched settings.
This supports controlled retrieval stability under edge conflict, not full spectral invariance across graph regimes.

\textbf{Efficiency, pilot boundary, and reproducibility.}
SMGFM uses $B$ Chebyshev filters of order $K$, giving $O(BK|\E|d)$ sparse multiplication cost per modality and avoiding $O(|\V|^3)$ eigendecomposition.
Because runs use uniform modality features and lightweight probes, the evidence supports complexity-level efficiency rather than measured speedup.
The strongest supported conclusion is mechanism-centered: SMGFM shows controlled binding and matched graph-task utility under the reported evidence chain, while missing evidence and full-scale validation remain boundaries.

\section{Conclusion}
In this paper, we identify frequency-blind mixing as a central obstacle for Multimodal-Attributed Graph (MAG) pretraining, where graph-smooth consensus and modality-private evidence can be entangled before the encoder decides how each signal should be used.
To address this issue, we propose \textbf{SMGFM}, a spectral multimodal graph pretraining framework that performs frequency-first modality tokenization, topology-guided spectral fusion, and frequency-routed pretraining.
SMGFM aligns low-frequency bands to capture graph-smooth cross-modal consensus and preserves high-frequency bands to retain modality-private evidence, making the roles of alignment and preservation directly testable.
Experiments show stronger controlled text-image binding than spatial fusion, matched-protocol gains on public real-data node classification and link prediction, and diagnostic evidence from low-band alignment, high-band drift, and ablation analyses.
Overall, SMGFM offers a mechanism-centered step toward frequency-aware multimodal graph pretraining, where topology and multimodal semantics are routed according to their graph-frequency roles.

\bibliographystyle{IEEEtran}
\bibliography{references}

\end{document}